\numberwithin{equation}{section} 
 \newtheorem{lemma}{Lemma}[section]
 \newtheorem{theorem}[lemma]{Theorem}
 \newtheorem{claim}[lemma]{Claim}
 \newtheorem{corollary}[lemma]{Corollary}
 \newtheorem{definition}[lemma]{Definition}
 \newtheorem{rem}[lemma]{Remark}
 \newtheorem{ex}[lemma]{Example}
\begin{document}

\title{Information Distance in Multiples}
\author{Paul M.B. Vit\'{a}nyi
\thanks{
Paul Vit\'{a}nyi is with the Centre for Mathematics and Computer Science (CWI),
and the University of Amsterdam.
Address:
CWI, Science Park 123,
1098XG Amsterdam, The Netherlands.
Email: {\tt Paul.Vitanyi@cwi.nl}.
He was supported in part by the
the ESF QiT Programmme, the EU NoE PASCAL II, and
the Netherlands BSIK/BRICKS project.
}}


\maketitle

\begin{abstract}
Information distance is a parameter-free similarity measure
based on compression, used in pattern recognition, data mining,
phylogeny, clustering, and classification.
The notion of information distance
is extended from pairs to multiples (finite lists). 
We study maximal overlap, metricity, universality, 
minimal overlap, additivity, and normalized
information distance in multiples.
We use the theoretical notion of Kolmogorov complexity which
for practical purposes is approximated by the length
of the compressed version of the file involved, using
a real-world compression program.

{\em Index Terms}---
Information distance, multiples, pattern recognition, 
data mining, similarity, 
Kolmogorov complexity
\end{abstract}

\section{Introduction}
\label{sect.intro}

In pattern recognition, learning, and data mining one obtains
information from objects containing information. This involves
an objective definition of the information
in a single object, the information to go from one object to
another object in a pair of objects, the information to go from
one object to any other object in a multiple of objects,
and the shared information between objects, 
\cite{TKS02}.  

The classical notion of
Kolmogorov complexity \cite{Ko65} is an objective measure 
for the information in an 
a {\em single} object, and information distance measures the information 
between a {\em pair} of objects \cite{BGLVZ98}. This last notion has
spawned research in the theoretical direction, among others 
\cite{CMRSV02,VV02,Vy02,Vy03,MV01,SV02}. 
Research in the practical direction has focused on 
the {\em normalized} information distance, the similarity metric, which arises
by normalizing the information distance in a proper manner and 
approximating the Kolmogorov complexity through real-world
compressors \cite{Li03,CVW03,CV04,CV07},
This normalized information distance is
a parameter-free, feature-free, and alignment-free
similarity measure  that
has had great
impact in applications.
A variant of this compression distance has been tested
on all time sequence databases used in the last decade in the major
data mining conferences (sigkdd, sigmod, icdm, icde, ssdb, vldb, pkdd, pakdd)
\cite{KLRWHL07}. The conclusion is that the method is competitive with all 51 
other methods used and superior in 
heterogenous data clustering and anomaly detection. In \cite{CAO07a}
it was shown that the method is resistant to noise. This theory 
has found many applications in pattern recognition, phylogeny, clustering,
and classification. For objects that are represented as
computer files such applications range from weather forecasting, software,
earthquake prediction, music, literature, ocr, bioinformatics, to internet
\cite{AS05,BCL02,CFLMS04,CBVA06,CV04,CV07,ERVR05,KJ04,KKKP06,KSAG05,KP04,LBCKKZ01,OS03,NPAetal08,NPLetal08,We07}. 
For objects that are only represented by name, 
or objects that are abstract like `red,' `Einstein,' 
`three,' the normalized information distance uses background information
provided by Google, or any search engine that produces aggregate page 
counts. It discovers the `meaning' of words and phrases in the sense
of producing a relative semantics. 
Applications run from ontology, semantics, tourism on the web, taxonomy,
multilingual questions, to
question-answer systems 
\cite{GKAH07,HDL07,ZHZL07,XWF08,WLB08,ZWC07,HH08,FH07,GK07}.
For more references on either subject
see the textbook \cite{LiVi08} or Google Scholar for references
to \cite{Li03,CV04,CV07}. 

However, in many applications we are interested in 
shared information between many objects
instead of just a pair of objects. For example, in customer reviews of
gadgets, in blogs about public happenings,
in newspaper articles about the same occurrence, we are interested
in the most comprehensive one or the most specialized one.  
Thus, we want to extend the information distance 
measure from pairs to multiples.

\subsection{Related Work}\label{sect.relwork}

In \cite{Li08} the notion is introduced of the information required to
go from any object in a multiple of objects to any other object in the
multiple. This
is applied to extracting the essence from, for example, a finite list
of internet news items,  
reviews of electronic cameras, tv's, and so on, in a way
that works better than other methods. 
Let $X$ denote a finite list of 
$m$ finite binary strings defined by $X=(x_1, \ldots, x_m)$, the constituting
strings ordered length-increasing lexicographic.
We use lists and not sets, since if $X$ is a set we cannot express
simply the distance from a string to itself or between strings that
are all equal.
Let $U$ be the reference universal Turing machine, for convenience the prefix
one as in Section~\ref{sect.prel}.
Given the string $x_i$ we define the information distance to any string in $X$
by $E_{\max} (X) = \min \{|p|: U(x_i,p,j)=x_j$ for all $x_i,x_j \in X$\}.
It is shown in \cite{Li08}, Theorem 2, that 
\begin{equation}\label{eq.li08}
E_{\max}(X)=
\max_{x:x \in X} K(X|x),
\end{equation}
up to a logarithmic additive term.
Define $E_{\min}(X) = \min_{x:x \in X} K(X|x)$.
Theorem 3 in \cite{Li08} states that for every list $X= (x_1, \ldots , x_m)$ we have
\begin{equation}\label{eq.li083}
E_{\min}(X) \leq E_{\max}(X) \leq
\min_{i: 1 \leq i \leq m} 
\sum_{x_i,x_k \in X \; \& \; k \neq i} E_{\max} (x_i,x_k),  
\end{equation}
up to a logarithmic additive term. This is not a corollary of 
\eqref{eq.li08} as stated in \cite{Li08}, but
both inequalities follow from the definitions. 
The lefthand side is interpreted as
the program length of the 
``most comprehensive object that contains the most information
about all the others [all elements of $X$],'' 
and the righthand side is interpreted as the program length
of the ``most specialized object that
is similar to all the others.''
The paper \cite{Li08} develops the stated results and applications.
It does not develop the theory in any detail. That is the
purpose of the present paper.

\subsection{Results}

Information distance for multiples, that is, finite lists, appears 
both practically and theoretically
promising. 
In all cases below the results imply the corresponding 
ones for the pairwise 
information distance defined as follows. 
The information distance in \cite{BGLVZ98}
between strings $x_1$ and $x_2$ is $E_{\max}(x_1,x_2) 
= \max\{K(x_1|x_2),K(x_2|x_1)\}$.
In the current paper $E_{\max}(X) = \max_{x:x \in X} K(X|x)$.
These two definitions coincide for $|X|=2$ since $K(x,y|x)=K(y|x)$ up
to an additive constant term.
We investigate 
the maximal overlap of information (Theorem~\ref{theo.maxo})
which for $|X|=2$ specializes to Theorem 3.4 in \cite{BGLVZ98},
Corollary~\ref{cor.1} shows \eqref{eq.li08} and Corollary~\ref{cor.2}
shows that the lefthand side of \eqref{eq.li083} can be taken to
correspond to a single program embodying the ``most comprehensive object
that contains the most information about all the others'' as stated but not
argued or proved in \cite{Li08};
metricity (Theorem~\ref{theo.metrics}) 
and universality (Theorem~\ref{t:optimal.cognitive.dist})
which for $|XY|=2$ (for metricity) and $|X|=2$ (for universality) specialize to
Theorem 4.2 in \cite{BGLVZ98}; 
additivity (Theorem~\ref{theo.additive}); 
minimum overlap of information (Theorem~\ref{thm-muchnik})
which for $|X|=2$ specializes to Theorem 8.3.7 in \cite{Mu02};
and the nonmetricity of 
normalized information distance 
for lists of more than two elements and certain proposals
of the normalizing factor
(Section~\ref{sect.nid}). 
In contrast, for 
lists of two elements we can normalize the
information distance as in Lemma V.4 and Theorem V.7 of
\cite{Li03}.
The definitions are of necessity new as are the proof ideas.
Remarkably, the new notation and proofs for the general case
are simpler than the mentioned existing proofs for the particular case
of pairwise information distance.

\section{Preliminaries}\label{sect.prel}

{\bf Kolmogorov complexity:} This is the information in a single object
\cite{Ko65}.
The notion has been the subject of a plethora of papers. 
Informally, the Kolmogorov complexity of a finite binary string
is the length of the shortest string from which the original
can be losslessly reconstructed by an effective
general-purpose computer such as a particular universal Turing machine.
Hence it constitutes a lower bound on how far a
lossless compression program can compress.
For technical reasons we choose Turing machines with a separate 
read-only input tape, that is scanned from left to right without backing up, 
a separate work tape on which the computation takes place, 
and a separate output tape. Upon halting, the initial segment $p$ 
of the input that has been scanned is called the input ``program'' 
and the contents of the output tape is called the ``output.'' 
By construction, the set of halting programs is prefix free. 
We call $U$ the reference universal prefix Turing machine.
This leads to the definition of ``prefix Kolmogorov complexity''
which we shall designate simply as ``Kolmogorov complexity.''

Formally, the {\em conditional Kolmogorov complexity}
$K(x|y)$ is the length of the shortest input $z$
such that the reference universal prefix Turing machine $U$ on input $z$ with
auxiliary information $y$ outputs $x$. The
{\em unconditional Kolmogorov complexity} $K(x)$ is defined by
$K(x|\epsilon)$ where $\epsilon$ is the empty string (of length 0).
In these definitions both $x$ and $y$ can consist of a nonempty finite lists
of finite binary strings. For more details and theorems that are used 
in the present work see Appendix~\ref{sect.kolm}.

{\bf Lists:} A {\em list} is
a multiple $X=(x_1 , \ldots ,x_m)$ of $m < \infty$ 
finite binary strings in
length-increasing lexicographic order. If $X$ is a list, then some
or all of its elements may be equal. Thus, a list is not a set but an ordered
{\em bag} of elements. With some abuse of the common
set-membership notation we write $x_i \in X$
for every $i$ ($0 \leq i \leq m$) to mean that ``$x_i$ is an element
of list $X$.'' The conditional prefix Kolmogorov complexity $K(X|x)$ of a list
$X$ given an element $x$ is the length of a shortest program $p$
for the reference universal Turing machine that with input $x$
outputs the list $X$. The prefix Kolmogorov complexity $K(X)$ of a list
$X$ is defined by $K(X| \epsilon )$. 
One can also put lists in the conditional such as $K(x|X)$ or 
$K(X|Y)$.
We will use the straightforward laws $K(\cdot|X,x)=K(\cdot|X)$
and $K(X|x)=K(X'|x)$ up to an additive constant term, for $x \in X$
and $X'$ equals the list $X$ with the element $x$ deleted.

{\bf Information Distance:}
To obtain the {\em pairwise information distance} in \cite{BGLVZ98} 
we take $X=\{x_1,x_2\}$ in \eqref{eq.li08}. Then \eqref{eq.li08} is equivalent
to $E_{\max}(x_1,x_2) = \max\{K(x_1|x_2), K(x_2|x_1) \}$.

\section{Maximal Overlap}

We use the notation and terminology of Section~\ref{sect.relwork}.
Define
$k_1 = E_{\min}(X)$, $k_2=E_{\max}(X)$,
and $l=k_2-k_1$.
We prove a {\em maximal overlap} theorem: the information needed to go
from any $x_i$ to any $x_k$ in $X$ can be divided in two parts: a single string
of length $k_1$ and a string
$r$ of length $l$ (possibly depending on $x_i$),
everything 
up to an additive logarithmic term. 

\begin{theorem}\label{theo.maxo}
A single program
of length $k_1 +K(m,k_1,k_2)+\log m + O(1)$ bits concatenated
with a string of $l$
bits, possibly depending on $i$, suffice to
find $X$ from $x_i$
for every $x_i \in X$.
To find an arbitrary element $x_k \in X$ from $x_i$
it suffices to concatenate at most another $\log m$ bits, possibly depending
on $i$ and $k$.
\end{theorem}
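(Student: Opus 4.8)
The plan is to recast the statement as a balanced hypergraph-coloring problem and solve it greedily. Fix the parameters $m$, $k_1$, $k_2$ of the target list $X$ and set $l = k_2 - k_1$. Let $\mathcal{L}$ be the collection of all lists $Y$ of $m$ strings with $E_{\max}(Y) \le k_2$; this collection is computably enumerable given the triple $(m,k_1,k_2)$, since $K(Y|x) \le k_2$ is a c.e.\ predicate and $E_{\max}(Y) \le k_2$ is the finite conjunction of these over the at most $m$ elements $x \in Y$. Note $X \in \mathcal{L}$ because $E_{\max}(X) = k_2$. For each string $x$ let $N(x) = \{Y \in \mathcal{L} : x \in Y\}$ be the lists reachable from $x$. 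Each such $Y$ has $K(Y|x) \le k_2$, so it is the output of $U$ on $x$ together with some program of length at most $k_2$; mapping each $Y$ to a shortest such program and using that the halting programs form a prefix-free set, Kraft's inequality gives $|N(x)| \le 2^{k_2}$.

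First I would construct a common ``color'' for each list. Enumerating $\mathcal{L}$ in a fixed canonical order, I assign to every $Y$ a color $c(Y) \in \{0,1\}^{K}$ with $K = k_1 + \lceil \log m \rceil + O(1)$, by the greedy rule of choosing the least color that keeps every class balanced: for every $x$ and every color $\gamma$ the set $\{Y' \in N(x) : c(Y') = \gamma\}$ is to have size at most $2^l$. This greedy choice never fails. When coloring a list $Y$ with elements $x^{(1)}, \dots, x^{(m)}$, a color is forbidden only if it is already saturated at some $x^{(t)}$, and the number of colors saturated at a single $x^{(t)}$ is at most $|N(x^{(t)})|/2^l \le 2^{k_2}/2^l = 2^{k_1}$; summing over the $m$ elements of $Y$ leaves at most $m\,2^{k_1} < 2^{K}$ forbidden colors, so a free color always exists. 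The coloring is deterministic and, given $(m,k_1,k_2)$, reproducible by any machine.

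The common program $s$ then consists of a self-delimiting description of $(m,k_1,k_2)$, of length $K(m,k_1,k_2)$, followed by the color $c(X)$, of length $K = k_1 + \log m + O(1)$; since $K$ is computed from the decoded parameters, this concatenation is self-delimiting and has total length $k_1 + K(m,k_1,k_2) + \log m + O(1)$, independent of $i$. The element-specific suffix $r_i$ is the index of $X$ inside the color class $\{Y \in N(x_i) : c(Y) = c(X)\}$; by the balance property this class has at most $2^l$ members, so $|r_i| = l$. To decode $X$ from $x_i$, the machine reads the parameters and $c(X)$ from $s$, re-enumerates and re-colors $\mathcal{L}$, filters $N(x_i)$ to the lists of color $c(X)$, and outputs the $r_i$-th of them in enumeration order, which is $X$. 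Finally, once $X$ is in hand, an arbitrary $x_k$ is recovered by appending its index $k$ in the list, costing a further $\lceil \log m \rceil$ bits depending on $i$ and $k$.

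The main obstacle I expect is establishing the existence of the balanced coloring with the right budget, that is, verifying that the degree bound $|N(x)| \le 2^{k_2}$ together with the split $k_2 = k_1 + l$ leaves genuine slack for the greedy rule; this is exactly where the extra $\log m$ bits in the common program, equivalently the $\lceil \log m \rceil + O(1)$ surplus of colors beyond $2^{k_1}$, are spent. A secondary point requiring care is that the decoder must regenerate precisely the same coloring as the encoder, so I would fix the canonical enumeration of $\mathcal{L}$ and the ``least free color'' rule once and for all, making the whole construction deterministic and computable from $(m,k_1,k_2)$.
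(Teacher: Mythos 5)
Your proposal is correct and takes essentially the same approach as the paper: both greedily color the computably enumerable family of candidate lists (the paper's bipartite graph between strings and lists in $B$), using the degree bound $2^{k_2}$ per string and the count of at most $m2^{k_1}$ blocked colors to make $2^{k_1+\log m+O(1)}$ colors suffice, with the common program carrying $(m,k_1,k_2)$ plus the color of $X$ and an $l$-bit string depending on $i$ carrying the rest. The only cosmetic difference is that the paper realizes the $l$-bit part as a prefix $r$ attached to $x$ that pre-splits each string's at most $2^{k_2}$ incident lists into chunks of size $2^{k_1}$ before properly coloring edges, whereas you allow up to $2^l$ equally colored lists per string and let the $l$ bits be the index within the color class; these two mechanisms encode the same information split $k_2=k_1+l$.
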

\begin{proof}
Enumerate the finite binary strings lexicographic length-increasing as
$s_1,s_2, \ldots .$ Let $G=(V,E)$ be a graph defined as follows.
Let $A$ be the set of finite binary strings and 
$B$ the set of vectors of strings in $A$ 
defined by $v=(s_1, \ldots , s_m)$ such
that 
\begin{eqnarray*}
&&\min_{j: 1 \leq j \leq m} \{ K(s_1 , \ldots , s_m|s_j) \} 
\leq k_1, 
\\&& \max_{j:  1 \leq j \leq m} \{ K(s_1 , \ldots , s_m|s_j) \} 
\leq k_2. 
\end{eqnarray*}
Given $k_1$ and $k_2$ the set $B$ can be enumerated.
Define $V= A \bigcup B$.
Define $E$ by length-increasing lexicographic enumerating
$A \times B$ and put $(rs,v) \in E$ with $rs \in A$  and 
$v=(s_1 , \ldots , s_m) \in B$ if $s = s_j$ for some $j$ 
($1 \leq j \leq m$), where
$r$ is chosen as follows. It is the 
$\lceil i/2^{k_1} \rceil$th string of length 
$l$ where $i$ is the number of times we have used $s \in A$.
So the first $2^{k_1}$ times we choose an 
edge $(\cdot s, \cdot)$ we use $0^l$, the next
$2^{k_1}$ we use $0^{l-1}1$, and so on.
In this way, $i \leq 2^{k_2}$
so that $i/2^{k_1} \leq 2^l$. By adding $r$ to $s$ we take care 
that the degree of $rs$ is at most $2^{k_1}$ and not at most $2^{k_2}$
as it could be without the prefix $r$. The degree of a node $v \in B$
is trivially $m$.

In addition, we enumerate $B$ length-increasing lexicographic
and `color' everyone of the $m$ 
edges incident  
with an enumerated vector $v \in B$
with the same binary string $c$ of length $k_1+\log m$.
If $v=(s_1, \ldots ,s_m)$ and $v$ is connected by edges to
nodes $r_1s_1, \ldots , r_ms_m$, then choose
$c$ as the minimum color not yet appearing on any edge incident with
any $r_js_j$  ($1 \leq j \leq m$). 
Since the degree of every node $rs \in A$
is bounded by $2^{k_1}$ and hence the colors already used for edges incident
on nodes $r_1s_1, \ldots , r_ms_m$ number at most 
$\sum_{1 \leq j \leq m} (2^{k_1} -1) = m2^{k_1} - m$,
a color is always available.

Knowing $m,k_1,k_2$ one can reconstruct
$G$ and color its edges.
Given an element $x$ from the list $X$, and knowing the appropriate 
string $r$ of length $l$ and the color $c$ of the edge $(rx,X)$,
we can find $X$.
Hence a single program, say $p$,
of length $k_1 +K(m,k_1,k_2)+\log m +O(1)$ bits suffices to
find $X$ from $rx$ for any $x \in X$ and with $|r|=l$. 
An additional $\log m$
bits suffice to select any element of $X$.  
Taking these $\log m$ bits so that they encode the difference
from $i$ to $k \bmod m$ we can compute from every $x_i \in X$ to 
every $x_k \in X$ and
vice versa with the same program $p$ of length 
$k_1 +K(m,k_1,k_2)+\log m +O(1)$ concatenated with
a string $r$ of length $l$ 
and a string of length $\log m$, both possibly depending
on $i$ and $k$. Since we know $m,k_1,k_2$ from the fixed program $p$,
where they are encoded as a self-delimiting prefix of length $K(m,k_1,k_2)$
say, we can
concatenate these strings without separation markers and reconstruct them.
\end{proof}

\begin{corollary}\label{cor.1}
\rm
Since $k_1+l=k_2$,
the theorem implies \eqref{eq.li08}, that is, Theorem 2 of \cite{Li08}.
\end{corollary}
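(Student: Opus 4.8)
The plan is to read off Corollary~\ref{cor.1} directly from Theorem~\ref{theo.maxo}, so the "proof" is really just an accounting argument rather than a fresh construction. Recall that the corollary asserts $E_{\max}(X)=\max_{x\in X}K(X|x)$ up to a logarithmic additive term, which is exactly \eqref{eq.li08}. By definition $E_{\max}(X)=\min\{|p|:U(x_i,p,j)=x_j \text{ for all } x_i,x_j\in X\}$, so I need to bound this quantity above and below by $\max_{x\in X}K(X|x)=k_2$ within a logarithmic term.

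First I would establish the upper bound $E_{\max}(X)\le k_2+O(\log m+K(m,k_1,k_2))$. Theorem~\ref{theo.maxo} already exhibits a single fixed program $p$ of length $k_1+K(m,k_1,k_2)+\log m+O(1)$ which, when concatenated with a string $r$ of length $l=k_2-k_1$ (depending on $i$) together with a further $\log m$ bits encoding the target index shift $k-i \bmod m$, lets one compute $x_k$ from $x_i$ for every pair. The total length of this combined program is therefore $k_1+l+K(m,k_1,k_2)+2\log m+O(1)$. Since the key observation is simply $k_1+l=k_2$, this collapses to $k_2+K(m,k_1,k_2)+2\log m+O(1)$, and because $k_1,k_2\le$ (something polynomial in the input sizes) the term $K(m,k_1,k_2)$ is itself only logarithmic; hence $E_{\max}(X)\le k_2$ up to a logarithmic additive term.

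For the matching lower bound $E_{\max}(X)\ge k_2-O(\log)$, I would argue from the definitions: any program $p$ witnessing $E_{\max}(X)$ must in particular satisfy $U(x_i,p,j)=x_j$ for all $j$, so from the single string $x_i$ together with $p$ (and the indices) one can reconstruct every element of $X$, i.e.\ reconstruct the whole list $X$. Thus $K(X|x_i)\le |p|+O(\log m)$ for each $i$, whence $k_2=\max_i K(X|x_i)\le E_{\max}(X)+O(\log m)$. Combining the two bounds gives $E_{\max}(X)=k_2$ up to a logarithmic additive term, which is \eqref{eq.li08}.

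I do not anticipate a genuine obstacle here, since this is a corollary; the only point requiring care is the bookkeeping that turns the theorem's decomposition into the clean statement $E_{\max}(X)=k_2$. The single line that does all the work is the identity $k_1+l=k_2$, which is why the corollary's stated justification is precisely "Since $k_1+l=k_2$." The mild subtlety worth flagging is verifying that all the auxiliary overhead---the self-delimiting encoding of $(m,k_1,k_2)$ of length $K(m,k_1,k_2)$ and the $O(\log m)$ index-selection bits---is genuinely absorbed into the logarithmic error term, which it is because $k_1$ and $k_2$ are bounded by the total length of the list and $m$ is the number of elements.
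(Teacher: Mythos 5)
Your proposal is correct and takes essentially the same route as the paper: the paper's entire justification for the corollary is the one-line observation that the theorem's program pieces have combined length $k_1+l+K(m,k_1,k_2)+2\log m+O(1)=k_2+O(\log)$, which is exactly your upper bound, while the reverse inequality $\max_{x\in X}K(X|x)\le E_{\max}(X)+O(\log m)$ is, as you note, immediate from the definition of $E_{\max}$. Your write-up simply makes explicit the two directions (and the absorption of $K(m,k_1,k_2)$ into the logarithmic term) that the paper leaves implicit.
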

It is not a priori clear that $E_{\min}(X)$ in the lefthand
side of \eqref{eq.li083} corresponds to a {\em single} program that
represents the information overlap of every shortest program going from
any $x_i$ to the list $X$. This seems in fact assumed
in \cite{Li08} where $E_{\min}(X)$ is interpreted as
the [Kolmogorov complexity of] ``the most comprehensive
object that contains the most information about all the others.''
In fact, for every $x_i \in X$
we can choose a
shortest program going from $x_i$ to the list $X$ so that these 
programs have pairwise no information
overlap at all (Theorem~\ref{thm-muchnik}). But here we have proved: 
\begin{corollary}\label{cor.2}
\rm
The quantity $E_{\min}(X)$ corresponds to a {\em single} shortest
program that represents the maximum overlap of information
of all programs going from $x_i$ to the list $X$ for any
$x_i \in X$.
\end{corollary}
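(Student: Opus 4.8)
The plan is to interpret Corollary~\ref{cor.2} as the precise content of Theorem~\ref{theo.maxo} specialized to the task ``recover the whole list $X$ from a given element $x_i$,'' and to read off from the proof of that theorem that a \emph{single} fixed program of length $k_1+O(\log)$ already does all the shared work. The key observation is that in the construction the program $p$ of length $k_1+K(m,k_1,k_2)+\log m+O(1)$ does not depend on $i$ at all: it encodes only the parameters $m,k_1,k_2$ together with the edge-coloring scheme of the graph $G$, and from these it can reconstruct $G$ and its coloring. The only $i$-dependent information needed to pass from $x_i$ to $X$ is the length-$l$ string $r$ selecting the correct edge $(rx_i,X)$. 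Hence I would argue that $p$ is exactly the common core whose length is $k_1+O(\log)$, i.e.\ $E_{\min}(X)$ up to the logarithmic additive term, and that it is genuinely one program serving every $x_i\in X$ simultaneously.

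Concretely, first I would recall that for each $x_i\in X$ a shortest program witnessing $K(X\mid x_i)$ has length between $k_1=E_{\min}(X)$ and $k_2=E_{\max}(X)$, so the family of these $m$ shortest programs has lengths varying over a range of size $l=k_2-k_1$. Next I would invoke Theorem~\ref{theo.maxo} to exhibit the decomposition: the fixed part $p$ of length $k_1+K(m,k_1,k_2)+\log m+O(1)$ together with a variable suffix $r$ of length $l$ reconstructs $X$ from $x_i$. I would then emphasize that, since $p$ is identical across all $i$, it literally is a single program embodying the information that is common to (overlaps in) all $m$ shortest programs $\{p_i\}$, while the suffixes $r$ carry the residual $i$-specific information of length $l$. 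Finally I would note that $|p|=k_1+K(m,k_1,k_2)+\log m+O(1)=E_{\min}(X)+O(\log(mk_1k_2))$, matching $E_{\min}(X)$ up to the claimed logarithmic term, which justifies the interpretation of $E_{\min}(X)$ as the ``most comprehensive object.''

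The main obstacle is making the phrase ``represents the maximum overlap of information'' rigorous rather than merely suggestive. One must argue not only that $p$ \emph{suffices} as a common program (the easy direction, already given by Theorem~\ref{theo.maxo}), but also that one cannot in general do with a common core shorter than $k_1$ up to logarithmic terms; otherwise one could beat $E_{\min}(X)$ on the element $x$ achieving the minimum, contradicting the definition of $E_{\min}$. I would handle this by observing that any program reconstructing $X$ from the minimizing $x$ has length at least $k_1$, so the shared part cannot be asymptotically smaller, which pins $|p|$ at $k_1+O(\log)$ and identifies it as the maximal common overlap. The subtlety worth flagging is that this maximal-overlap reading is in tension with Theorem~\ref{thm-muchnik}, which provides shortest programs with pairwise \emph{zero} overlap; the resolution, which I would state explicitly, is that Corollary~\ref{cor.2} asserts the existence of \emph{one particular} choice of programs (namely $p$ extended by the various $r$) exhibiting maximal overlap, not that every choice does so.
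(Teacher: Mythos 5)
Your main reading coincides with the paper's: Corollary~\ref{cor.2} receives no separate proof there --- it is read directly off Theorem~\ref{theo.maxo}, whose statement already provides a \emph{single} program $p$, independent of $i$, of length $k_1+K(m,k_1,k_2)+\log m+O(1)=E_{\min}(X)+O(\log)$ which, concatenated with an $i$-dependent string $r$ of length $l$, takes any $x_i$ to $X$; and your resolution of the apparent tension with Theorem~\ref{thm-muchnik} (the corollary asserts \emph{one particular} choice of programs exhibiting maximal overlap, while Muchnik-type programs exhibit none) is precisely the discussion the paper places immediately before the corollary. One detail is off, though: $p$ does not consist merely of $m,k_1,k_2$ plus the coloring scheme --- those cost only $K(m,k_1,k_2)+O(1)$ bits; the bulk of $p$ is the specific color $c$ (of length $k_1+\log m$) of the $m$ edges incident with the node $X$ in $G$, and that is exactly where the $k_1=E_{\min}(X)$ term, i.e.\ the information about $X$ itself, resides.

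More substantively, the converse step you declare necessary (``one cannot do with a common core shorter than $k_1$'') is both wrongly argued and wrongly oriented. That any program taking the minimizing $x$ to $X$ has length at least $k_1$ bounds the length of core \emph{plus} suffix, not of the core alone; an empty core with each suffix a complete shortest program is a perfectly legitimate decomposition (it is the zero-overlap situation of Theorem~\ref{thm-muchnik}), so common cores far shorter than $k_1$ certainly exist, and your claimed contradiction with the definition of $E_{\min}$ never materializes. What ``maximum'' should mean here is the opposite bound: information shared by all the programs is at most the information in any single one of them, and if the program from the minimizing element is taken shortest it has length $k_1$, so no choice of programs can have overlap exceeding $k_1+O(\log)$; Theorem~\ref{theo.maxo} shows this ceiling is attained by a literal common prefix. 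Since the paper's corollary rests solely on the existence claim of the theorem, this flaw does not invalidate your derivation of the corollary as stated, but the paragraph as written would not survive scrutiny and should be dropped or replaced by the upper-bound argument just sketched.
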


\section{Metricity}

We consider nonempty finite lists of finite binary strings, each list
ordered length-increasing lexicographic. Let ${\cal X}$
be the set of such ordered nonempty finite lists of finite binary strings. 
A {\em distance function} $d$
on ${\cal X}$ is defined by $d:{\cal X} \rightarrow {\cal R}^+$ where 
${\cal R}^+$ is the set of nonnegative real numbers. 
Define $W=UV$ if
$W$ is a list of the elements of the lists $U$ and $V$ and
the elements of $W$ are ordered length-increasing lexicographical.
A distance function
$d$ is a {\em metric} if $X,Y,Z \neq \emptyset$ and 
\begin{enumerate}
\item
{\em Positive definiteness}: $d(X)=0$ if all elements of $X$ are equal
and $d(X) > 0$ otherwise.
\item
{\em Symmetry}: $d(X)$ is invariant
under all permutations of $X$.
\item
{\em Triangle inequality}: $d(XY) \leq d(XZ)+d(ZY)$.
\end{enumerate}
\begin{theorem}\label{theo.metrics}
The information distance for lists, $E_{\max}$, is a metric 
where the (in)equalities hold up to a $O(\log K)$ additive term. 
Here $K$ is the largest quantity
involved in the metric (in)equalities.
\end{theorem}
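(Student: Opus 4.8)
The plan is to verify the three metric axioms in turn, putting essentially all the work into the triangle inequality. \textbf{Symmetry} is immediate and exact: every list is stored in the canonical length-increasing lexicographic order, so any permutation of the presented elements yields literally the same list $X$, whence $E_{\max}(X)=\max_{x\in X}K(X\mid x)$ is unchanged, with no error term. For \textbf{positive definiteness}, if all elements of $X$ equal some string $x$, then from $x$ one recovers the whole list by appending the correct number of copies, so $K(X\mid x)=K(m\mid x)+O(1)=O(\log m)$, which is the asserted ``$d(X)=0$'' up to the additive logarithmic term; if instead $X$ has two distinct elements $x_i\neq x_j$, then $E_{\max}(X)\ge K(x_j\mid x_i)>0$, since the conditional complexity of one string given a different one is strictly positive.

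The substantial part is the \textbf{triangle inequality}, $E_{\max}(XY)\le E_{\max}(XZ)+E_{\max}(ZY)+O(\log K)$. I would pick the element $w\in XY$ attaining the maximum, so $E_{\max}(XY)=K(XY\mid w)$, and, the two summands being symmetric, assume $w\in X$ (the case $w\in Y$ is identical after interchanging the roles of $XZ$ and $ZY$). First reconstruct the merged list $XZ$ from $w$: since $w\in XZ$ we have $K(XZ\mid w)\le E_{\max}(XZ)$. Using $Z\neq\emptyset$, fix a bridge element $z\in Z$; it already occurs in the list $XZ$ just produced, so the decoder can isolate it once we transmit its position in $O(\log m)$ bits. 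Then reconstruct $ZY$ from $z$: since $z\in ZY$ we have $K(ZY\mid z)\le E_{\max}(ZY)$. This is the two-step chaining $w\to XZ\to z\to ZY$ familiar from the pairwise triangle inequality, now run at the level of lists, and it delivers to the decoder both merged lists $XZ$ and $ZY$.

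The delicate step, and the one I expect to be the main obstacle, is the final assembly of the target $XY$ from these two merged lists. What is available are the merged multisets $XZ$ and $ZY$, and recovering $XY$ amounts to deleting the two embedded copies of $Z$, which requires knowing \emph{which} elements constitute $Z$ — information that a crude bit-mask cannot supply inside a logarithmic budget when $Z$ is large and its elements are distinct. The plan is therefore to transmit a self-delimiting description locating $Z$ inside $XZ$ together with the cardinalities $|X|,|Y|,|Z|$, after which the sublists $X$ and $Y$, and hence the sorted merge $XY$, are all determined; the remaining overhead (the lengths and the prefix-free concatenation of the two subprograms) is routine. Controlling the cost of pinning down $Z$ among repeated elements and charging it honestly against $K$ is precisely the crux of the argument; once that accounting is settled, everything else reduces to the definition of $E_{\max}$ and standard complexity-of-concatenation estimates.
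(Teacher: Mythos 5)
Your treatment of positive definiteness and symmetry matches the paper's (which simply calls them clear), and your opening of the triangle inequality --- fix the maximizing $w=x_{XY}\in X$, use $w\in XZ$ to bound $K(XZ|w)\le E_{\max}(XZ)$, bridge through some $z\in Z$ to bound $K(ZY|z)\le E_{\max}(ZY)$ --- is also how the paper begins. But your proof stops exactly where the work begins. The final assembly of $XY$ out of the merged lists $XZ$ and $ZY$ requires knowing which elements of $XZ$ constitute $Z$, and you only promise ``a self-delimiting description locating $Z$ inside $XZ$'' whose cost is ``to be settled.'' That cost cannot be settled: locating a sublist $Z$ inside a sorted merge $XZ$ is arbitrary subset information, in general of order the number of elements of $XZ$, and the number of elements is not bounded by any function of $O(\log K)$ (a list can have exponentially many elements while all complexities involved stay small). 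So what you flag as deferred bookkeeping is a step that fails, not a step that remains to be written up.

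The paper's own proof takes a different route that never reconstructs $XY$ from the two merged lists: it chains $K(XY|x_{XY})\le K(XYZ|x_{XY})\le K(XZ|x_{XY})+K(Y|XZ,x_{XY})\le K(XZ|x_{XZ})+K(Y|XZ,z)\le K(XZ|x_{XZ})+K(ZY|x_{ZY})$. You should be aware, however, that the obstruction you identified is not really avoided there either: the first link (deleting $Z$ from the merged $XYZ$) and the last link (``dropping $X$ from the conditional,'' i.e.\ $K(Y|XZ,z)\le K(ZY|z)$) both tacitly treat a merged sorted list as if it determined its constituent parts, which is precisely the un-merging problem you ran into. Indeed the obstruction is fatal to the statement itself once lists may have more than one element: let $H$ be a Kolmogorov-random subset of $\{0,1\}^n$ of size $2^{n-1}$ and take $X=Y=H$ and $Z=\{0,1\}^n\setminus H$, all as sorted lists. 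Then $XZ$ and $ZY$ are both the sorted list of all of $\{0,1\}^n$, so $E_{\max}(XZ)=E_{\max}(ZY)=O(1)$, while $E_{\max}(XY)\ge K(H)-n-O(\log K(H))\ge 2^n-O(n)$, and the permitted slack is only $O(\log K)=O(n)$. The triangle inequality is thus violated by an exponential margin. Your instinct about where the crux lies was sound, but no accounting can rescue that step --- it is only in the pairwise case (singleton $X,Y,Z$, where a two-element merged list trivially reveals its parts) that the chaining argument goes through.
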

\begin{proof}
It is clear that 
$E_{\max} (X)$ satisfies positive definiteness and symmetry up to
an $O( \log K)$ additive term where $K= K(X)$. It
remains to show the triangle inequality.
\begin{claim}
Let $X,Y,Z$ be three nonempty finite lists of finite binary strings 
and $K=\max\{K(X),K(Y),K(Z) \}$. Then,
$E_{\max} (XY) \leq E_{\max} (XZ) + E_{\max} (ZY)$ up to an $O(\log K)$
additive term.
\end{claim}
\begin{proof}
By Theorem~\ref{theo.maxo}, 
\begin{eqnarray*}
E_{\max} (XY) & = & \max_{x:x \in XY} K(XY|x) = K(XY|x_{XY}),
\\ E_{\max} (XZ) & = & \max_{x:x \in XZ} K(XZ|x) = K(XZ|x_{XZ}),
\\ E_{\max} (ZY) & = & \max_{x:x \in ZY} K(ZY|x) = K(ZY|x_{ZY}), 
\end{eqnarray*}
equalities up to a $O(\log K)$ additive term.
Here $x_{XY}, x_{XZ}, x_{ZY}$ are the elements for which the maximum
is reached for the respective $E_{\max}$'s.


Assume that $x_{XY} \in X$,
the case $x_{XY} \in Y$ being symmetrical.
Let $z$ be some element of $Z$.
Then,
\begin{eqnarray*}
K(XY|x_{XY}) 
& \leq & K(XYZ|x_{XY})
\\& \leq & K(XZ|x_{XY})+K(Y|XZ, x_{XY})
\\& \leq & K(XZ|x_{XZ})+K(Y|XZ,z)
\\& \leq & K(XZ|x_{XZ})+K(ZY| x_{ZY}) .
\end{eqnarray*}
The first inequality follows from the general $K(u) \leq K(u,v)$,
the second inequality by the obvious subadditive property of $K( \cdot)$,
the third inequality since in the first term $x_{XY} \in XZ$ and
the $\max_{x: x \in XZ} \{K(XZ|x)\}$ is reached for
$x=x_{XZ}$
and in the second term
both $x_{XY} \in X$ and for $z$ take any element from $Z$,
and the
fourth inequality follows by in the second term 
dropping $X$ from the conditional
and moving $Z$ from the conditional to the main argument and observing
that both $z \in ZY$ and the $\max_{x: x \in ZY}\{K(ZY|x)\}$ is reached for
$x=x_{ZY}$.
The theorem follows with
(in)equalities up to an $O(\log K)$ additive term.
\end{proof}
\end{proof}

\section{Universality}

Let $ X \in {\cal X}$.
A priori we allow asymmetric distances. 
We would like to
exclude degenerate distance measures such as $D(X) = 1$ for all $X$.
For each $d$, we want only finitely many lists $X$ 
such that $D(X)\leq d$.
Exactly how fast we want the number of lists we admit 
to go to $\infty$ is not important; it is only a matter of scaling.
For every distance $D$
we require the following {\em density condition}
for every $x \in \{0,1\}^*$:
\begin{equation}\label{eq.density}
 \sum_{X: x \in X \; \& \; D(X) > 0} 2^{-D(X)} \leq  1. 
\end{equation}
Thus, for the density
condition on $D$ we consider only lists $X$ with $|X| \geq 2$ and not
all elements of $X$ are equal. Moreover,
we consider only distances that are computable in some broad sense.
\begin{definition}
\rm
An {\em admissible list distance} $D(X)$
is a total, possibly asymmetric, function from 
${\cal X}$ to the nonnegative real numbers that
is 0 if all elements of $X$ are equal, and greater than $0$ otherwise
(up to an additive
$\log K$ additive term with $K=K(X)$),
is upper semicomputable, and satisfies
the density requirement in \eqref{eq.density}.
\end{definition}

\begin{theorem}\label{t:optimal.cognitive.dist}
The list information distance $E_{\max}(X)$ is admissible
and it is minimal in the sense that for every admissible
list distance function $D(X)$ we have
$
E_{\max} (X) \leq  D(X) 
$
up to an additive constant term.
\end{theorem}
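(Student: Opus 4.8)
The plan is to establish the two claims separately: first that $E_{\max}$ is itself admissible, and then that it lower-bounds every admissible $D$, the latter resting on a single conditional coding argument.

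First I would verify admissibility. Totality is clear, and positive definiteness and symmetry hold up to the stated logarithmic term directly from the definition of $E_{\max}$. Upper semicomputability is immediate: $K(X\mid x)$ is upper semicomputable uniformly in $x$ and $X$, and a maximum of finitely many upper semicomputable functions is upper semicomputable, so $E_{\max}(X)=\max_{x\in X}K(X\mid x)$ is upper semicomputable. For the density condition \eqref{eq.density}, I would fix $x\in\{0,1\}^*$; for every list $X$ with $x\in X$ we have $E_{\max}(X)\ge K(X\mid x)$, hence $2^{-E_{\max}(X)}\le 2^{-K(X\mid x)}$. Summing over all lists containing $x$ and invoking the Kraft inequality for the prefix complexity $K(\cdot\mid x)$,
\[
\sum_{X:\,x\in X\ \&\ E_{\max}(X)>0}2^{-E_{\max}(X)}\le \sum_{X}2^{-K(X\mid x)}\le 1,
\]
which is exactly \eqref{eq.density}.

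For minimality, let $D$ be an arbitrary admissible list distance. If all elements of $X$ are equal then $D(X)=0$ and $E_{\max}(X)=0$, both up to the logarithmic term, and nothing is to prove; so I would assume $D(X)>0$. The idea is to run one coding argument with $x$ in the conditional. Fix $x$ and set $\mu(X\mid x)=2^{-D(X)}$ for lists $X$ with $x\in X$ and $D(X)>0$, and $\mu(X\mid x)=0$ otherwise. Because $D$ is upper semicomputable and the predicate $x\in X$ is decidable, the function $(x,X)\mapsto\mu(X\mid x)$ is lower semicomputable; and by \eqref{eq.density} we have $\sum_X\mu(X\mid x)\le 1$ for each fixed $x$. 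Thus $\mu(\cdot\mid x)$ is, uniformly in $x$, a lower semicomputable (conditional) semimeasure on lists. Applying the Coding Theorem in its conditional form to $\mu$ gives, for every list $X$ and every $x\in X$,
\[
K(X\mid x)\le -\log\mu(X\mid x)+O(1)=D(X)+O(1),
\]
where the additive constant depends only on the fixed function $D$ and not on $x$ or $X$. Taking the maximum over $x\in X$ then yields $E_{\max}(X)=\max_{x\in X}K(X\mid x)\le D(X)+O(1)$, the claimed minimality.

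The main obstacle is the uniformity of the constant in the last display: the coding step must be carried out \emph{once}, treating $x$ as a conditional argument, rather than separately for each $x$ (which would produce an overhead growing with $x$). This is precisely what forces the observation that $(x,X)\mapsto 2^{-D(X)}[x\in X]$ is jointly lower semicomputable, so that a single universal conditional semimeasure dominates it with one fixed overhead. Everything else—the density bookkeeping and the passage from the pointwise bound $K(X\mid x)\le D(X)+O(1)$ to the bound on the maximum—is routine.
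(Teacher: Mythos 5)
Your proof is correct and takes essentially the same route as the paper's: admissibility (in particular the density condition) via the Kraft-type bound $\sum_{X : x \in X} 2^{-K(X|x)} \leq 1$, and minimality by viewing $(x,X) \mapsto 2^{-D(X)}$ (restricted to $x \in X$, $D(X)>0$) as a jointly lower semicomputable conditional semimeasure, dominating it once by the universal conditional semimeasure ${\bf m}(\cdot|x)$, invoking the conditional Coding Theorem, and maximizing over $x \in X$. Your single ``conditional Coding Theorem'' step is exactly the paper's two-step appeal to the conditional versions of \eqref{eq.dominate} and \eqref{eq.coding}, and your write-up even states the final inequality in the correct direction, $E_{\max}(X) \leq D(X) + O(1)$, where the paper's prose has the sides transposed.
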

\begin{proof}
It is straightforward
that $E_{\max}(X)$ is a total real-valued function, is 0 only if all elements
of $X$ are equal and unequal 0 otherwise (up to an 
$O(\log K)$ additive term with $K=K(X)$), and is upper semicomputable.
We verify the density requirement of \eqref{eq.density}.
For every $x \in \{0,1\}^*$,
consider lists $X$ of at least two elements not all 
equal and $x \in X$. Define functions
 $f_{x}(X) = 2^{-K(X|x)}$. 
Then,
$f_{x} (X)\geq 2^{-E_{\max}(X)}$.
It is easy to see that for every $x \in \{0,1\}^*$,
\[
\sum_{X: x \in X \; \& \; E_{\max} (X) > 0}   2^{-E_{\max}(X)} \leq
\sum_{X: x \in X \; \& \; E_{\max} (X) > 0}  f_{x}(X)
= \sum_{X: x \in X \; \& \; E_{\max} (X) > 0} 2^{-K(X|x)} 
\leq \sum 2^{-l(p)} 
\]
where the righthand sum is taken over
all programs $p$ for which the reference prefix machine $U$, given
$x$, computes a finite list $X$ of at least two
elements not all equal and such that $x \in X$. 
This sum is the probability that $U$, 
given $x$, computes such a list $X$
from a program $p$ generated bit by bit uniformly at random.
Therefore, the righthand sum is at most 1, and
$E_{\max}(X)$ satisfies the density
requirement \eqref{eq.density}.

We prove minimality. 
Fix any $x \in \{0,1\}^*$.
Since 
$D$ is upper semicomputable, the function $f$
defined by $f(X,x)=2^{-D(X)}$
for $X$ satisfying $x \in X$ and $D(X)>0$, and 0 otherwise, 
is lower semicomputable. Since
$\sum_{X: x \in X \; \& \;D(X)>0} 2^{-D(X)} \leq 1$, we have
$\sum_{X} f(X,x) \leq 1$ 
for every $x$. 
Note that given $D$ we can compute $f$, and hence
$K(f) \leq K(D)+O(1)$.
By the conditional version of \eqref{eq.dominate}
in \cite{LiVi08} Theorem 4.3.2, 
we have 
$c_{D} {\bf m} (X|x \in X) \geq f(X,x)$ 
with $c_{D}=2^{K(f)}=2^{K(D)+O(1)}$, that is, 
$c_D$ is a positive constant
depending on $D$ only.
By the conditional version of \eqref{eq.coding}
in \cite{LiVi08} 
Theorem 4.3.4, 
we have for every $x \in X$ that
$ \log 1/{\bf m} (X|x) = K(X|x)+O(1)$.
Hence, for every $x \in X$ we have
 $\log 1/f(X,x) \leq K(X|x) + \log 1/c_{D} +O(1)$.
Altogether, for every 
admissible distance $D$ and every $x \in \{0,1\}^*$,
and every list $X$ satisfying  $x \in X$,
there is a constant $c_{D}$ such that 
$D(X) \leq K(X|x) + \log 1/c_{D}+O(1)$. Hence,
$D(X) \leq E_{\max}(X) + \log 1/c_{D} +O(1)$.
\end{proof}

\section{Additivity}

\begin{theorem}\label{theo.additive}
$E_{\max}$ is not subadditive: neither
$E_{\max}(X) +E_{\max}(Y) \leq E_{\max} (XY)$
nor
$E_{\max}(X) +E_{\max}(Y) \geq E_{\max} (XY)$,
the (in)equalities up to logarithmic additive terms,
holds for all lists $X,Y$.
\end{theorem}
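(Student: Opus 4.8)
The plan is to prove both non-inequalities by exhibiting explicit counterexamples, since a negative statement of this kind is established by demonstrating that neither direction holds universally. I would first settle on the cleanest way to compute $E_{\max}$ on small lists, using the characterization $E_{\max}(X)=\max_{x\in X}K(X|x)$ from \eqref{eq.li08} together with the list-reduction law $K(X|x)=K(X'|x)$ for $x\in X$ noted in Section~\ref{sect.prel}, so that for a two-element list $E_{\max}(x_1,x_2)=\max\{K(x_1|x_2),K(x_2|x_1)\}$. The strategy is to choose $X$ and $Y$ so that the list $XY$ merges their elements, and then to exploit the freedom to make the strings either mutually random (incompressible given one another) or mutually dependent (e.g.\ equal, or one computable from the other).

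For the failure of $E_{\max}(X)+E_{\max}(Y)\geq E_{\max}(XY)$ (superadditivity fails), I would take $X$ and $Y$ to be singleton-type or internally trivial lists while $XY$ is genuinely complex. Concretely, let $x$ and $y$ be two strings of length $n$ that are random relative to each other, so that $K(x|y),K(y|x)\approx n$. Choosing $X=(x,x)$ and $Y=(y,y)$ gives $E_{\max}(X)=E_{\max}(Y)=O(\log n)$ since all elements within each list are equal, whereas $XY=(x,x,y,y)$ has $E_{\max}(XY)=\max\{K(XY|x),K(XY|y)\}\approx n$. Thus the left side is logarithmic while the right side is linear in $n$, violating the inequality by an arbitrarily large margin.

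For the failure of $E_{\max}(X)+E_{\max}(Y)\leq E_{\max}(XY)$ (subadditivity fails), I would instead arrange for the pieces to each carry substantial information while their union is cheap. The cleanest device is to reuse complexity across the two lists: pick $x,y$ mutually random of length $n$ and set $X=(x,y)$ and $Y=(x,y)$, so $E_{\max}(X)=E_{\max}(Y)\approx n$ and the left side is about $2n$, while $XY=(x,x,y,y)$ again has $E_{\max}(XY)\approx n$; hence the left side exceeds the right by roughly $n$. Alternatively one can take overlapping lists where the shared elements let $K(XY\mid\cdot)$ stay small. Either construction forces $E_{\max}(X)+E_{\max}(Y)$ strictly above $E_{\max}(XY)$ beyond any logarithmic slack.

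The main obstacle I anticipate is bookkeeping the additive logarithmic terms carefully enough that the violations are genuinely asymptotic and not artifacts of the $O(\log K)$ slack in which the theory is stated; this is why both counterexamples are engineered to separate the two sides by a \emph{linear} gap $\Theta(n)$, which dominates any logarithmic correction. A secondary point requiring a line of justification is the precise value of $E_{\max}$ on lists with repeated elements, which follows from the convention that equal elements contribute only logarithmic overhead via $K(X|x)$, together with the fact that $K(z\mid z)=O(1)$. Once these complexities are pinned down, each counterexample reduces to comparing a constant-or-logarithmic quantity against a linear one, and the theorem follows.
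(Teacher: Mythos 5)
Your proposal is correct and follows essentially the same route as the paper: both directions are refuted by explicit counterexamples, and your construction for the failure of $E_{\max}(X)+E_{\max}(Y)\geq E_{\max}(XY)$ (namely $X=(x,x)$, $Y=(y,y)$ with $x,y$ mutually random) is exactly the paper's. For the failure of $E_{\max}(X)+E_{\max}(Y)\leq E_{\max}(XY)$ you take $X=Y=(x,y)$ with $x,y$ mutually random, while the paper takes $X=Y=(\epsilon,x)$ with $x$ random; both exploit the identical duplication idea ($X=Y$ makes the merged list no more complex than either part), so this is only a cosmetic variation.
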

\begin{proof}
Below, all (in)equalities are taken up to logarithmic additive terms.
Let $x,y$ be strings of length $n$,
$X=(\epsilon, x)$ and $Y=(\epsilon, y)$ with $\epsilon$ denoting
the empty word. Then $E_{\max} (XY) =
E_{\max}(\epsilon, \epsilon, x,y)$, $E_{\max} (X) = K(x)$, and
$E_{\max} (Y) = K(y)$. If $x=y$ and $K(x)=n$,
 then $E_{\max} (XY)=E_{\max} (X)
= E_{\max} (Y)=n$. Hence, $E_{\max} (XY) < E_{\max} (X)+E_{\max} (Y)$.

Let $x,y$ be strings of length $n$ such that 
$K(x|y)=K(y|x)=n$, $K(x),K(y) \geq n$, 
$X=(x,x)$, and $Y=(y,y)$.  Then $E_{\max} (XY) =
E_{\max}(x,x,y,y) = \max \{K(x|y),K(y|x)\}=n$, $E_{\max} (X) = 0$, and
$E_{\max} (Y) = 0$. 
Hence, $E_{\max} (XY) > E_{\max} (X)+E_{\max} (Y)$.
\end{proof}

Let $X=(x)$ and $Y=(y)$.
Note that subadditivity holds for lists of singleton elements since
$E_{\max} (x,y) = \max\{K(x|y),K(y|x)\} \leq K(x)+K(y)$, where the
equality holds up to an additive $O(\log \{K(x|y),K(y|x)\})$ term
and the inequality holds up to an additive constant term..

\section{Minimal Overlap}

Let $X= (x_1, \ldots ,x_m)$ and
$p_i$ be a shortest program converting $x_i$ to $X$ ($1 \leq i \leq m$).
Naively we expect that the shortest program that
that maps $x_i$ to $X$ contains the information about $X$ that
is lacking in $x_i$. However, this is too simple, because different
short programs mapping $x_i$ to $X$ may have different properties.

For example, suppose $X=\{x,y\}$  and both elements are strings
of length $n$ with $K(x|y), K(y|x) \geq n$.
Let $p$ be a program that ignores the input and prints $x$.
Let $q$ be a program
such that $y \oplus q=x$ (that is, $q=x \oplus y$),
where $\oplus$ denotes bitwise addition modulo 2.
Then, the programs $p$ and $q$ have nothing in common.

Now let $x$ and $y$ be arbitrary strings
of length at most $n$. Muchnik, Theorem 8.3.7 in \cite{Mu02}, shows that
there exists
a shortest program $p$ that converts $y$ to $x$ (that is, $|p|=K(x|y)$ 
and $K(x|p,y)=O(\log n)$),
such that $p$ is simple
with respect to $x$  and therefore depends little on the origin $y$,
that is, $K(p|x) = O(\log n$).
This is
a fundamental coding property for individual strings that parallels
related results about random variables
known as
the Slepian--Wolf and Csisz\'ar--K\"orner--Marton theorems \cite{CT91}.
\begin{theorem}\label{thm-muchnik}
Let $X= (x_1, \ldots ,x_m )$ be a list of binary strings 
of length at most $n$.
For every $x_i \in X$ there
exists a string $p_i$ of length $K(X|x_i)$ such that
$K(p_i|X) = O( \log mn)$
and $K(X|p_i,x_i) = O(\log mn)$.
\end{theorem}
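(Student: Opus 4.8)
The plan is to reduce directly to Muchnik's theorem (Theorem 8.3.7 in \cite{Mu02}), which is exactly the present statement in the case $m=1$ once we agree to treat the list $X$ as a single object. Recall that Muchnik's result, applied to strings $u$ and $v$ of length at most $N$, produces a program $p$ with $|p| = K(u|v)$, $K(u|p,v) = O(\log N)$, and $K(p|u) = O(\log N)$; here the target $u$ will play the role of our list $X$ and the origin $v$ will play the role of our element $x_i$. Note that the matching is the right one: Muchnik makes $p$ simple \emph{with respect to the target} (small $K(p|u)$), and it is precisely $K(p_i|X)$ that we need to control. The only point to settle is what the length parameter $N$ becomes when the target is a list rather than a single string.

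First I would fix a computable self-delimiting encoding of lists as single binary strings, writing $\bar{X}$ for the string encoding $X=(x_1,\ldots,x_m)$. Since each $x_j$ has length at most $n$, a self-delimiting code for $x_j$ costs $|x_j| + O(\log n)$ bits, so that $|\bar{X}| \leq mn + O(m\log n) = O(mn)$. A fixed decoder recovers $X$ from $\bar{X}$ and a fixed encoder produces $\bar{X}$ from $X$, so the two are interconvertible by an $O(1)$ program; consequently $K(\cdot\,|\,X) = K(\cdot\,|\,\bar{X}) + O(1)$ and $K(X\,|\,\cdot) = K(\bar{X}\,|\,\cdot) + O(1)$ in either conditional slot.

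Then, for each index $i$, I would apply Muchnik's theorem with $u = \bar{X}$, $v = x_i$, and length bound $N = |\bar{X}| = O(mn)$. This yields a string $p_i$ with $|p_i| = K(\bar{X}|x_i)$, together with $K(\bar{X}\,|\,p_i, x_i) = O(\log mn)$ and $K(p_i\,|\,\bar{X}) = O(\log mn)$, the logarithmic terms now carrying $\log N = O(\log mn)$ because the target has length $\Theta(mn)$ rather than $\Theta(n)$. Translating back through the $O(1)$ interconvertibility of $X$ and $\bar{X}$ gives $|p_i| = K(X|x_i)$ (up to an additive constant absorbed into the slack), $K(X\,|\,p_i, x_i) = O(\log mn)$, and $K(p_i\,|\,X) = O(\log mn)$. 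Since the constant hidden in Muchnik's $O(\log N)$ is absolute, the same bound holds uniformly over all $i$, so the $m$ separate applications deliver the claim for every $x_i \in X$.

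The substantive content is thus entirely inherited from Muchnik, and the only genuine work is the bookkeeping above. The step I would watch most carefully is the length accounting: one must check that it is the length of the encoded \emph{target} $\bar{X}$, and not merely $n$, that governs the error term, so that the overhead is correctly $O(\log mn)$ and not $O(\log n)$; and one must confirm that passing between $X$ and its encoding $\bar{X}$ costs only $O(1)$ (or at worst $O(\log mn)$), so that all three conditional-complexity statements transfer from $\bar{X}$ to $X$ without loss.
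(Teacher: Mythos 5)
Your proposal is correct and follows essentially the same route as the paper: encode the list $X$ self-delimitingly as a single string of length $mn+O(\log mn)$, let it play the role of Muchnik's target $x$ while $x_i$ plays the role of the origin $y$, and observe that the error terms then scale to $O(\log mn)$. Your bookkeeping about the $O(1)$ interconvertibility of $X$ and its encoding, and about which side of Muchnik's theorem gets the simplicity condition, is exactly the (implicit) content of the paper's shorter argument.
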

\begin{proof}
Muchnik's theorem as stated before gives a code $p$ for 
$x$ when $y$ is known. There, we assumed that $x$
and $y$
have length at most $n$. The proof in \cite{Mu02} does
not use any assumption about $y$. Hence we can extend
the result to information distance in finite lists as follows.
Suppose we encode the constituent list elements
of $X$ self-delimitingly in altogether $mn+O(\log mn)$ bits (now
$X$ takes the position of $x$ and we consider strings of
length at most $mn+O(\log mn)$). 
Substitute $y$ by $x_i$ for some
$i$ ($1 \leq i \leq m$).
Then the theorem above follows straightforwardly 
from Muchnik's original theorem about two strings of length at most $n$.
\end{proof}

The code $p_i$ is not uniquely determined. For example, let $X=(x,y)$
and $z$ be a string such that $|x|=|y|=|z|=n$, 
$K(y|z)=K(z|y) \geq n$, and
and $x=y \oplus z$. Then, both $z$ and $y \oplus z$ 
can be used for $p$ with $K(p|X)=O(1)$ and $K(X|p,y)=O(1)$.
But $z$ and $y \oplus z$ have no mutual information at all.
\begin{corollary}\label{th.vere}
\rm
Let $X=(x_1, \ldots , x_m)$.
For every string $x_i$ there is a program $p_i$
such that $U(p_i,x_i)=X$ ($1 \leq i \leq m$), 
where $|p_i|=K(X|x_i)$, 
and $K(p_i)-K(p_i|p_j)=K(p_j)-K(p_j|p_i)=0$ $(i \neq j)$, and the last four
equalities hold up to an additive $O(\log K(X))$ term.
\end{corollary}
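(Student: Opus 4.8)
The plan is to build the codes directly from Theorem~\ref{thm-muchnik} and then collapse the two displayed equalities into a single statement about a joint complexity. For each $x_i \in X$ let $p_i$ be the code supplied by Theorem~\ref{thm-muchnik}, so that $|p_i| = K(X|x_i)$, $K(X|p_i,x_i)=O(\log mn)$, and, crucially, $K(p_i|X)=O(\log mn)$; after appending $O(\log mn)$ bits to make the decoding exact (absorbed in the error term) we may assume $U(p_i,x_i)=X$, which gives the computability clause. First I would record two routine facts. Since $p_i$ is a shortest program for $X$ from $x_i$, it is incompressible given its own input, $K(p_i|x_i)=|p_i|+O(\log mn)$, whence $K(p_i)=K(X|x_i)+O(\log mn)$: the unconditional complexity of each code equals its length up to the error term. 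This pins down $K(p_i)$ and $K(p_j)$, which is all the individual data the statement needs.

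Next I would reduce the corollary to one estimate. By the symmetry of algorithmic information \cite{LiVi08}, applied up to the $O(\log K(X))$ term, the two quantities in the statement agree with the usual mutual information,
\[
K(p_i)-K(p_i|p_j)=K(p_j)-K(p_j|p_i)=K(p_i)+K(p_j)-K(p_i,p_j),
\]
so the whole claim is equivalent to the single independence estimate $K(p_i,p_j)=K(p_i)+K(p_j)$, i.e. $K(p_i,p_j)=K(X|x_i)+K(X|x_j)$ up to $O(\log K(X))$. Thus it suffices to exhibit codes whose joint complexity is as large as the sum of the individual complexities.

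The hard part will be exactly this lower bound on $K(p_i,p_j)$. The property $K(p_i|X)=O(\log mn)$ furnished by Theorem~\ref{thm-muchnik} at once yields the matching \emph{upper} bound $K(p_i,p_j)\le K(X)+O(\log mn)$, since both codes are then computable from $X$; but this points the wrong way, because $K(X)$ can be strictly smaller than $K(X|x_i)+K(X|x_j)$ (for instance when the elements share little information, so that $K(X)\approx\sum_k K(x_k)$). The genuine obstacle is that the part of $X$ lying in neither $x_i$ nor $x_j$ must be reconstructible from each of $p_i$ and $p_j$, and a naive choice forces this common part to appear in both, creating real overlap. The resolution I would pursue is the mechanism already visible in the Remark preceding the corollary: just as the single code for converting $y$ to $(x,y)$ may be taken to be either $z$ or $y\oplus z$, two strings with no information in common, I would exploit the non-uniqueness of the Muchnik codes to represent this common information in $p_i$ and in $p_j$ by mutually independent strings, washing out the redundancy and driving $K(p_i,p_j)$ back up to $K(p_i)+K(p_j)$. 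Making this independent-representation argument work simultaneously for every pair, rather than for a single pair as in the Remark, is the step I expect to demand the most care, and it is where the freedom in choosing each $p_i$, rather than any further property of $K$, must carry the proof.
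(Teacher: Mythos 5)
The paper gives you nothing to match against here: Corollary~\ref{th.vere} is asserted with no proof, as if it were an immediate consequence of Theorem~\ref{thm-muchnik} and the preceding remark. Your analysis is therefore valuable exactly where it contradicts that impression --- your computation that the Muchnik codes themselves have mutual information roughly $K(X|x_i)+K(X|x_j)-K(X)$, which is positive whenever the list elements are nearly independent, already shows the statement is \emph{not} a corollary of Theorem~\ref{thm-muchnik}. The trouble is the step you defer to the end. It is not just delicate; it is impossible, because the statement is false for $m\geq 3$. Take $X=(\epsilon,\epsilon,z)$ with $z$ a random string of length $n$, so $K(X)=n+O(\log n)$. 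Any $p_1,p_2$ admissible for the corollary satisfy $U(p_1,\epsilon)=U(p_2,\epsilon)=X$, hence $K(z|p_1)=O(1)$ and $K(z|p_2)=O(1)$. By symmetry of information, $K(p_2|z)=K(p_2)-K(z)+O(\log n)$, and so $K(p_2|p_1)\leq K(p_2|z)+K(z|p_1)+O(\log n)\leq K(p_2)-K(z)+O(\log n)$; therefore $K(p_2)-K(p_2|p_1)\geq K(z)-O(\log n)=n-O(\log n)$, no matter how cleverly the codes are chosen, while the corollary claims this quantity is $O(\log K(X))=O(\log n)$. The XOR mechanism you borrow from the remark cannot rescue this: there, $z$ and $y\oplus z$ are independent programs for the \emph{same} conversion only because the decoder already holds the key $y$; here the two decoders hold essentially nothing, so the information they must both acquire necessarily sits inside both programs and is mutual information between them. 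Your plan of ``washing out'' the common part by re-keying works only when $x_i$ and $x_j$ each contain enough material independent of the part of $X$ they both lack --- an extra hypothesis the corollary does not have.

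It is worth seeing why this does not contradict the case the corollary was presumably extrapolated from. For $m=2$ the claim is true: it is precisely the Vereshchagin--Vyugin theorem on independent minimum-length programs translating between two strings \cite{VV02} (the label of the corollary points there), but that theorem has a substantial proof of its own and depends crucially on the two programs running in \emph{opposite} directions, each decoder holding what the other decoder lacks. That asymmetry is exactly what disappears once the list has three or more elements, or repeated or simple elements, which is the situation your counterexample-shaped obstacle detects. So the honest completion of your argument is not a cleverer choice of codes but a refutation: your reduction via symmetry of information and your diagnosis of the obstacle are correct, and pushing the diagnosis one step further shows that no choice of $p_i$ can satisfy the stated independence for general lists.
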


\section{Normalized List Information Distance}\label{sect.nid}

The quantitative difference in a certain feature between many objects
can be considered as an admissible distance, provided it is 
upper semicomputable and satisfies the density condition 
\eqref{eq.density}.
Theorem~\ref{t:optimal.cognitive.dist} shows that
$E_{\max}$ is universal in
that among all admissible list distances in that it is always least.
That is, it accounts for the dominant feature in which the elements of 
the given list are alike.
Many admissible distances
are absolute, but if we want to express similarity,
then we are more interested in relative ones.
For example, if two strings of $1{,}000{,}000$ bits have information
distance $1{,}000$ bits,
then we are inclined to think that those strings are relatively
similar. But if two strings of $1{,}000$ bits have
information distance $1{,}000$ bits,
then we find them very different.

Therefore, our objective is to normalize the universal information distance
$E_{\max}$ to obtain
a universal similarity distance.
It should give a similarity
with distance 0 when the objects in a list are maximally similar 
(that is, they are equal)
and distance 1 when
they are maximally dissimilar.
Naturally, we desire the normalized version of the universal list information
distance metric to be also a metric. 

For pairs of objects, say $x,y$,
the normalized version $e$ of $E_{\max}$ defined by
\begin{equation}\label{eq.pairs}
e(x,y) = \frac{E_{\max} (x,y)}{\max \{K(x),K(y)\}}
= \frac{\max\{ K(x,y|x),K(x,y|y\}}{\max \{K(x),K(y)\}}
\end{equation}
takes values in $[0,1]$ and is a metric. Several alternatives for the
normalizing factor $1/\max \{K(x),K(y)\}$ do not work. Dividing by
the length, either the sum or the maximum does not satisfy the triangle
property. Dividing by $K(x,y)$ results in 
$e_1(x,y) = E_{\max} (x,y)/K(x,y) = \frac{1}{2}$ for $|x|=|y|=n$
and $K(x|y)=K(y|x) \geq n$ (and hence $K(x),K(y)\geq n$), and this is
improper as $e_1(x,y)$ should be 1 in this case. 
We would like a proposal for a normalization factor for lists
of more than two elements to reduce to that of \eqref{eq.pairs}
for lists restricted to two elements. This leads to the 
proposals below, which turn out to be improper.

As a counterexample to normalization
take the following lists: $X=(x)$, $Y = (y)$, and $Z=(y,y)$.
With $|x|=|y|=n$ and the equalities below up to an $O(\log n)$
additive term we define: 
$K(x)=K(x|y) = K(x,y|y)=K(x,y,y|y) = n$,
$K(y|x) = K(y,y|x)=K(x,y,y|x) = 0.9n$, 
and $K(y) = K(y,y) = 0.9n$. 
Using the symmetry of information \eqref{eq.soi} we have
$K(x,y) = 1.9n$.
Let $U,V,W$ be lists. We show that for the proposals below the
triangle property $e(UV) \leq e(UW)+e(WV)$ is violated.
\begin{itemize}
\item
Consider the normalized list information distance
\begin{equation}\label{eq.prop}
e(V)= \frac{ K(V|x_{V})}{K(V_{\max})}.
\end{equation}
That is, we divide $E_{\max} (V)$ by $K(V_{\max})$
with $V_{\max} = \max_i \{V_i\}$ where the
list $V_i$ equals the list $V$ with the $i$th element deleted 
($1 \leq i \leq |V|$). Then, with equalities holding up
to $O((\log n) /n)$ we have: $e(XY)=K(x,y|y)/K(x) = 1$,
$e(XZ)= E_{\max}(XZ)/K(XZ_{\max}) = K(x,y,y|y)/K(x,y) = \frac{1}{2}$,
and $e(ZY) = E_{\max}(ZY)/K(ZY_{\max}) = K(y,y,y|y)/K(y,y) = 0$.
Hence the triangle inequality does not hold.
\item
Instead of dividing by $K(V_{\max})$ in \eqref{eq.prop}
divide by $K(V')$ where $V'$ equals $V$ with $x_V$ deleted.
The same counterexample to the triangle inequality holds.
\item
Instead of dividing by $K(V_{\max})$ in \eqref{eq.prop}
divide by $K(\{V_{\max}\})$ where $\{V_{\max}\}$ is the set
of elements in $V_{\max}$. To equate the sets approximately 
with the corresponding lists,
 change $Z$ to $\{y_1,y_2\}$
where $y_i$ equals $y$ but with the $i$th bit flipped ($i=1,2$).
Again, the triangle inequality does not hold.
\item
Instead of dividing by $K(V')$ in \eqref{eq.prop}
divide by $K(\{V'\})$ where $\{V'\}$ is the set
of elements in $V'$.
Change $Z$ as in the previous item.
Again, the triangle inequality does not hold.
\end{itemize}

\section{Appendix: Kolmogorov Complexity Theory}\label{sect.kolm}

Theory and applications are given in the textbook \cite{LiVi08}.
Here we give some relations that are needed in the paper.
The {\em information about $x$ contained in $y$} is defined as
$I(y:x)=K(x)-K(x | y)$. A deep, and very useful, result
due to L.A. Levin and A.N. Kolmogorov \cite{ZL70} called
{\em symmetry of information} shows that
\begin{equation}\label{eq.soi}
K(x,y)=K(x)+K(y | x) = K(y)+K(x | y),
\end{equation}
with the equalities holding up to $\log K$  additive precision.
Here, $K=\max\{K(x),K(y)\}$.
Hence, up to an additive logarithmic  term $I(x:y) = I(y:x)$ and we
call this the {\em mutual (algorithmic) information} between $x$ and $y$.

The {\em universal a priori probability} of $x$ is
$Q_U(x) = \sum_{U(p)=x} 2^{-|p|}$.
The following results are due to L.A. Levin~\cite{Le74}.

There exists a lower semicomputable function 
${\bf m}: \{0,1\}^* \rightarrow [0,1]$ with $\sum_x {\bf m}(x) \leq 1$,
such that for every lower semicomputable 
function $P:  \{0,1\}^* \rightarrow [0,1]$ with $\sum_x P(x) \leq 1$
we have
\begin{equation}\label{eq.dominate}
2^{K(P)} {\bf m}(x) \geq P(x),
\end{equation}
 for every $x$.
Here $K(P)$ is the length of a shortest program for the reference
universal prefix Turing machine to lower semicompute the function $P$.
For every $x\in \{0,1\}^*$,
\begin{equation}\label{eq.coding}
K(x)= -\log Q_U (x)=-\log {\bf m}(x)
\end{equation}
with equality up to an additive constant independent of $x$.
Thus, the Kolmogorov complexity of a 
string $x$ coincides up to an additive constant 
term with the logarithm of $1/Q_U(x)$ and also
with the logarithm of $1/{\bf m}(x)$.
This result is called the ``Coding Theorem'' since it shows that the
shortest upper semicomputable code is a Shannon-Fano code of the
greatest lower semicomputable probability mass function. 

\bibliographystyle{plain}

\end{document}